\definecolor{webblue}{rgb}{0,0,1}
\definecolor{webgreen}{rgb}{0,.5,0}
\definecolor{webbrown}{rgb}{.6,0,0}
\newcommand{\BEAS}{\begin{eqnarray*}}
\newcommand{\EEAS}{\end{eqnarray*}}
\newcommand{\BEA}{\begin{eqnarray}}
\newcommand{\EEA}{\end{eqnarray}}
\newcommand{\BEQ}{\begin{equation}}
\newcommand{\EEQ}{\end{equation}}
\newcommand{\BIT}{\begin{itemize}}
\newcommand{\EIT}{\end{itemize}}
\newcommand{\BNUM}{\begin{enumerate}}
\newcommand{\ENUM}{\end{enumerate}}
\newcommand{\BA}{\begin{array}}
\newcommand{\EA}{\end{array}}
\newcommand{\BC}{\begin{center}}
\newcommand{\EC}{\end{center}}
\newcommand{\DS}{\displaystyle}
\newcommand{\reals}{{\mbox{\bf R}}}
\newcommand{\symm}{{\mbox{\bf S}}}  
\newcommand{\Tr}{\mathop{\bf Tr}}
\newcommand{\diag}{\mathop{\bf diag}}
\newcommand{\QED}{~~\rule[-1pt]{6pt}{6pt}}
\newcommand{\argmin}{\mathop{\rm argmin}}
\newcommand{\argmax}{\mathop{\rm argmax}}
\newcounter{exno}
\long\def\@makecaption#1#2{
   \vskip 9pt
   \begin{small}
   \setbox\@tempboxa\hbox{{\bf #1:} #2}
   \ifdim \wd\@tempboxa > 5.5in
        \begin{center}
        \begin{minipage}[t]{5.5in}
        \addtolength{\baselineskip}{-0.95pt}
        {\bf #1:} #2 \par
        \addtolength{\baselineskip}{0.95pt}
        \end{minipage}
        \end{center}
   \else
    \hbox to\hsize{\hfil\box\@tempboxa\hfil}
   \fi
   \end{small}\par
}
\newcounter{oursection}
\newcounter{lecture}
\newtheorem{theorem}{Theorem}
\newtheorem{corollary}[theorem]{Corollary}
\newenvironment{proof}{\textbf{Proof.}}{\QED\bigskip}
\title{Support Vector Machine Classification with Indefinite Kernels}
\author{Ronny Luss\thanks{ORFE Department, Princeton University,
Princeton, NJ 08544. \texttt{rluss@alumni.princeton.edu}} \and Alexandre
d'Aspremont\thanks{ORFE Department, Princeton University, Princeton,
NJ 08544. \texttt{aspremon@princeton.edu}}}
\begin{document}
\maketitle

\begin{abstract}
We propose a method for support vector machine classification using indefinite kernels. Instead of directly minimizing or stabilizing a nonconvex loss function, our algorithm simultaneously computes support vectors and a proxy kernel matrix used in forming the loss. This can be interpreted as a penalized kernel learning problem where indefinite kernel matrices are treated as noisy observations of a true Mercer kernel. Our formulation keeps the problem convex and relatively large problems can be solved efficiently using the projected gradient or analytic center cutting plane methods. We compare the performance of our technique with other methods on several standard data sets.
\end{abstract}


\section{Introduction}
Support vector machines (SVM) have become a central tool for solving binary classification problems. A critical step in support vector machine classification is choosing a suitable kernel matrix, which measures similarity between data points and must be positive semidefinite because it is formed as the Gram matrix of data points in a reproducing kernel Hilbert space. This positive semidefinite condition on kernel matrices is also known as Mercer's condition in the machine learning literature.  The classification problem then becomes a linearly constrained quadratic program. Here, we present an algorithm for SVM classification using indefinite kernels\footnote{A preliminary version of this paper appeared in the proceedings of the Neural Information Processing Systems (NIPS) 2007 conference and is available at \texttt{http://books.nips.cc/nips20.html}}, i.e. kernel matrices formed using similarity measures which are not positive semidefinite.

Our interest in indefinite kernels is motivated by several observations. First, certain similarity measures take advantage of application-specific structure in the data and often display excellent empirical classification performance. Unlike popular kernels used in support vector machine classification, these similarity matrices are often indefinite, so do not necessarily correspond to a reproducing kernel Hilbert space. (See \citeasnoun{Ong2004} for a discussion.)

In particular, an application of classification with indefinite kernels to image classification using Earth Mover's Distance was discussed in \citeasnoun{Zamo2007}. Similarity measures for protein sequences such as the Smith-Waterman and BLAST scores are indefinite yet have provided hints for constructing useful positive semidefinite kernels such as those decribed in \citeasnoun{Saig2004} or have been transformed into positive semidefinite kernels with good empirical performance (see \citeasnoun{Lanc2003}, for example).  Tangent distance similarity measures, as described in \citeasnoun{Sima1998} or \citeasnoun{Haas2002}, are invariant to various simple image transformations and have also shown excellent performance in optical character recognition. Finally, it is sometimes impossible to prove that some kernels satisfy Mercer's condition or the numerical complexity of evaluating the exact positive kernel is too high and a proxy (and not necessarily positive semidefinite) kernel has to be used instead (see \citeasnoun{Cutu07}, for example). In both cases, our method allows us to bypass these limitations. Our objective here is to derive efficient algorithms to directly use these indefinite similarity measures for classification.

Our work closely follows, in spirit, recent results on kernel learning (see \citeasnoun{Lanc2004} or \citeasnoun{Ong2005}), where the kernel matrix is learned as a linear combination of given kernels, and the result is explicitly constrained to be positive semidefinite. While this problem is numerically challenging, \citeasnoun{Bach2004}  adapted the SMO algorithm to solve the case where the kernel is written as a positively weighted combination of other kernels. In our setting here, we never \emph{numerically} optimize the kernel matrix because this part of the problem can be solved explicitly, which means that the complexity of our method is substantially lower than that of classical kernel learning algorithms and closer in practice to the algorithm used in \citeasnoun{Sonn2006}, who formulate the multiple kernel learning problem of \citeasnoun{Bach2004} as a semi-infinite linear program and solve it with a column generation technique similar to the analytic center cutting plane method we use here.

\subsection{Current results}
Several methods have been proposed for dealing with indefinite kernels in SVMs.  A first direction embeds data in a pseudo-Euclidean (pE) space: \citeasnoun{Haas2005}, for example, formulates the classification problem with an indefinite kernel as that of minimizing the distance between convex hulls formed from the two categories of data embedded in the pE space.  The nonseparable case is handled in the same manner using reduced convex hulls. (See \citeasnoun{Benn2000} for a discussion on geometric interpretations in SVM.)

Another direction applies direct spectral transformations to indefinite kernels: flipping the negative eigenvalues or shifting the eigenvalues and reconstructing the kernel with the original eigenvectors in order to produce a positive semidefinite kernel (see \citeasnoun{Wu2005} and \citeasnoun{Zamo2007}, for example). Yet another option is to reformulate either the maximum margin problem or its dual in order to use the indefinite kernel in a convex optimization problem. One reformulation suggested in \citeasnoun{Lin2003} replaces the indefinite kernel by the identity matrix and maintains separation using linear constraints. This method achieves good performance, but the convexification procedure is hard to interpret. Directly solving the nonconvex problem sometimes gives good results as well (see \citeasnoun{Wozn2006} and \citeasnoun{Haas2005}) but offers no guarantees on performance.

\subsection{Contributions}
In this work, instead of directly transforming the indefinite kernel, we simultaneously learn the support vector weights and a proxy Mercer kernel matrix by penalizing the distance between this proxy kernel and the original, indefinite one. Our main result is that the kernel learning part of that problem can be solved explicitly, meaning that the classification problem with indefinite kernels can simply be formulated as a perturbation of the positive semidefinite case.

Our formulation can be interpreted as a penalized kernel learning problem with uncertainty on the input kernel matrix. In that sense, indefinite similarity matrices are seen as noisy observations of a true positive semidefinite kernel and we learn a kernel that increases the generalization performance.  From a complexity standpoint, while the original SVM classification problem with indefinite kernel is nonconvex, the penalization we detail here results in a convex problem, and hence can be solved efficiently with guaranteed complexity bounds.

The paper is organized as follows.  In Section \ref{s:max-min} we formulate our main classification result and detail its interpretation as a penalized kernel learning problem. In Section \ref{s:algos} we describe three algorithms for solving this problem.  Section \ref{s:extensions} discusses several extensions of our main results.  Finally, in Section \ref{s:num-res}, we test the numerical performance of these methods on various data sets.

\noindent \\
\textbf{Notation}\\
\noindent
We write $\symm^n$ ($\symm^n_+$) to denote the set of symmetric (positive-semidefinite) matrices of size $n$.  The vector $e$ is the $n$-vector of ones. Given a matrix $X$, $\lambda_i\left(X\right)$ denotes the $i^{th}$ eigenvalue of $X$. $X_+$ is the positive part of the matrix $X$, i.e. $X_+=\sum_i\mbox{max}(0,\lambda_i)v_iv_i^T$ where $\lambda_i$ and $v_i$ are the $i^{th}$ eigenvalue and eigenvector of the matrix $X$.  Given a vector $x$, $\|x\|_1=\sum{|x_i|}$.

\section{SVM with indefinite kernels} \label{s:max-min}
In this section, we modify the SVM kernel learning problem and formulate a penalized kernel learning problem on indefinite kernels. We also detail how our framework applies to kernels that satisfy Mercer's condition.

\subsection{Kernel learning}
Let $K\in\symm^n$ be a given kernel matrix and let $y\in\reals^n$ be the vector of labels, with $Y=\diag(y)$, the matrix with diagonal $y$.  We formulate the kernel learning problem as in \citeasnoun{Lanc2004}, where the authors minimize an upper bound on the misclassification probability when using SVM with a given kernel $K$.  This upper bound is the generalized performance measure
\BEQ \label{eq:gen_perf_meas}
\omega_C(K)=\max_{\{0\leq\alpha\leq C,\alpha^Ty=0\}}\alpha^Te-\Tr(K(Y\alpha)(Y\alpha)^T)/2
\EEQ
where $\alpha\in\reals^n$ and $C$ is the SVM misclassification penalty. This is also the classic 1-norm soft margin SVM problem.  They show that $\omega_C(K)$ is convex in $K$ and solve problems of the form
\BEQ \label{eq:kern_learn_lanc}
\min_{K\in\mathcal{K}} \omega_C(K)
\EEQ
in order to learn an optimal kernel $K^*$ that achieves good generalization performance. When $\mathcal{K}$ is restricted to convex subsets of $S^n_+$ with constant trace, they show that problem (\ref{eq:kern_learn_lanc}) can be reformulated as a convex program. Further restrictions to $\mathcal{K}$ reduce (\ref{eq:kern_learn_lanc}) to more tractable optimization problems such as semidefinite and quadratically constrained quadratic programs.  Our goal is to solve a problem similar to (\ref{eq:kern_learn_lanc}) by restricting the distance between a proxy kernel used in classification and the original indefinite similarity measure.

\subsection{Learning from indefinite kernels}
\label{ss:indKernelLearning} The performance measure in (\ref{eq:gen_perf_meas}) is the dual of the SVM classification problem with hinge loss and quadratic penalty.  When $K$ is positive semidefinite, this problem is a convex quadratic program. Suppose now that we are given an indefinite kernel matrix $K_0\in\symm^n$.  We formulate a new instance of problem (\ref{eq:kern_learn_lanc}) by restricting $K$ to be a positive semidefinite kernel matrix in some given neighborhood of the original (indefinite) kernel matrix $K_0$ and solve
\[
\DS \min_{\{K\succeq0,~\|K-K_0\|^2_F\leq\beta\}}~\max_{\{\alpha^Ty=0,~0\leq\alpha\leq C\}}~\alpha^Te-\frac{1}{2}\Tr(K(Y\alpha)(Y\alpha)^T)
\]
in the variables $K\in\symm^n$ and $\alpha\in\reals^n$, where the parameter $\beta >0$ controls the distance between the original matrix $K_0$ and the proxy kernel $K$.  This is the kernel learning problem (\ref{eq:kern_learn_lanc}) with $\mathcal{K}=\{K\succeq0,~\|K-K_0\|^2_F\leq\beta\}$.  The above problem is infeasible for small values of $\beta$, so we replace here the hard constraint on $K$ by a penalty $\rho$ on the distance between the proxy kernel and the original indefinite similarity matrix and solve instead
\BEQ \label{eq:kern_learn_hard}
\DS \min_{\{K\succeq0\}}~\max_{\{\alpha^Ty=0,~0\leq\alpha\leq C\}}~\alpha^Te-\frac{1}{2}\Tr(K(Y\alpha)(Y\alpha)^T)+\rho\|K-K_0\|^2_F
\EEQ
Because~(\ref{eq:kern_learn_hard}) is convex-concave and the inner maximization has a compact feasible set, we can switch the max and min to form the dual
\BEQ \label{eq:kern_learn}
\max_{\{\alpha^Ty=0,0\leq\alpha\leq C\}} ~ \min_{\{K\succeq0\}} ~ \alpha^Te-
\frac{1}{2}\Tr(K(Y\alpha)(Y\alpha)^T)+\rho\|K-K_0\|^2_F
\EEQ
in the variables $K\in\symm^n$ and $\alpha\in\reals^n$.

We first note that problem (\ref{eq:kern_learn}) is a convex optimization problem.  The inner minimization problem is a convex conic program on $K$. Also, as the pointwise minimum of a family of concave quadratic functions of $\alpha$, the solution to the inner problem is a concave function of $\alpha$, hence the outer optimization problem is also convex (see \citeasnoun{Boyd2004} for further details). Thus, (\ref{eq:kern_learn}) is a concave maximization problem subject to linear constraints and is therefore a convex problem in $\alpha$. Our key result here is that the inner kernel learning optimization problem in (\ref{eq:kern_learn}) can be solved in closed form.

\begin{theorem} \label{th:kstar}
Given a similarity matrix $K_0\in S^n$, a vector $\alpha\in\reals^n$ of support vector coefficients and the label matrix $Y=\diag(y)$, the optimal kernel in problem (\ref{eq:kern_learn}) can be computed explicitly as:
\BEQ\label{eq:kstar}
K^*=(K_0+(Y\alpha)(Y\alpha)^T/(4\rho))_+
\EEQ
where $\rho\geq 0$ controls the penalty.
\end{theorem}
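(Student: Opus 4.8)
The plan is to solve the inner minimization over $K\succeq 0$ in closed form by reducing it to a Euclidean projection onto the positive semidefinite cone. Since the term $\alpha^Te$ does not depend on $K$, I would drop it and focus on minimizing $g(K)=-\frac{1}{2}\Tr(K(Y\alpha)(Y\alpha)^T)+\rho\|K-K_0\|_F^2$ over $K\in\symm^n_+$. The objective is strictly convex in $K$ (the Frobenius penalty dominates), so it has a unique minimizer over the cone, and it suffices to identify it.

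The key algebraic step is to complete the square in $K$. Expanding $\|K-K_0\|_F^2=\Tr(K^2)-2\Tr(KK_0)+\Tr(K_0^2)$ and collecting the terms linear in $K$, namely $-\frac{1}{2}\Tr(K(Y\alpha)(Y\alpha)^T)-2\rho\Tr(KK_0)$, I would recognize them as exactly the cross term generated by $\rho\|K-M\|_F^2$ with the choice $M=K_0+(Y\alpha)(Y\alpha)^T/(4\rho)$. A short computation then yields $g(K)=\rho\|K-M\|_F^2+c$, where the additive constant $c=\rho\Tr(K_0^2)-\rho\Tr(M^2)$ is independent of $K$. Minimizing $g$ over the cone is therefore equivalent to finding the nearest positive semidefinite matrix to $M$ in Frobenius norm.

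Finally, I would invoke the standard characterization of the projection onto $\symm^n_+$: for any symmetric $M$, the unique minimizer of $\|K-M\|_F^2$ over $K\succeq 0$ is the positive part $M_+$. Substituting back $M=K_0+(Y\alpha)(Y\alpha)^T/(4\rho)$ gives $K^*=(K_0+(Y\alpha)(Y\alpha)^T/(4\rho))_+$, which is exactly formula (\ref{eq:kstar}).

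I expect the only step needing real care to be the projection result; the completion of the square is routine. The cleanest argument diagonalizes $M=Q\Lambda Q^T$ and uses orthogonal invariance of the Frobenius norm to rewrite the problem as minimizing $\|\tilde K-\Lambda\|_F^2$ over $\tilde K=Q^TKQ\succeq 0$. Splitting this into $\sum_i(\tilde K_{ii}-\lambda_i)^2+\sum_{i\neq j}\tilde K_{ij}^2$, the off-diagonal sum is minimized at zero, while each diagonal term, subject to the necessary condition $\tilde K_{ii}\geq 0$ for a PSD matrix, is minimized at $\tilde K_{ii}=\max(0,\lambda_i)$. Since the diagonal matrix with these entries is feasible and attains both lower bounds simultaneously, it is the unique optimum, and transforming back gives $K^*=M_+$.
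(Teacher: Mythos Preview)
Your proof is correct and follows essentially the same route as the paper: drop the $K$-independent term $\alpha^Te$, complete the square in the Frobenius norm to recognize the inner problem as $\min_{K\succeq 0}\|K-(K_0+(Y\alpha)(Y\alpha)^T/(4\rho))\|_F^2$, and identify the solution as the positive-part projection onto $\symm^n_+$. The only addition is that you supply a short argument for the projection formula via diagonalization, whereas the paper simply cites it; this is a helpful elaboration but not a different approach.
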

\begin{proof}
For a fixed $\alpha$, the inner minimization problem can be written out as
\[
\min_{\{K\succeq0\}} ~
\alpha^Te+\rho(\Tr(K^TK)-2\Tr(K^T(K_0+\frac{1}{4\rho}(Y\alpha)(Y\alpha)^T))+\Tr(K_0^TK_0))
\]
where we have replaced $\|K-K_0\|^2_F=\Tr((K-K_0)^T(K-K_0))$ and collected similar terms.  Adding and subtracting the constant $\rho\Tr((K_0+\frac{1}{4\rho}(Y\alpha)(Y\alpha)^T)^T(K_0+\frac{1}{4\rho}(Y\alpha)(Y\alpha)^T))$ shows that the inner minimization problem is equivalent to the problem
\[
\BA {ll}
\mbox{minimize} & \|K-(K_0+\frac{1}{4\rho}(Y\alpha)(Y\alpha)^T)\|^2_F\\
\mbox{subject to} & K\succeq0
\EA \]
in the variable $K\in\symm^n$, where we have dropped the remaining constants from the objective. This is the projection of the matrix  $K_0+(Y\alpha)(Y\alpha)^T/(4\rho)$ on the cone of positive semidefinite matrices, which yields the desired result.
\end{proof}

Plugging the explicit solution for the proxy kernel derived in (\ref{eq:kstar}) into the classification problem~(\ref{eq:kern_learn}), we get
\BEQ \label{eq:kern_learn2}
\DS \max_{\{\alpha^Ty=0,~0\leq\alpha\leq C\}} ~ \alpha^Te-
\frac{1}{2}\Tr(K^*(Y\alpha)(Y\alpha)^T)+\rho\|K^*-K_0\|^2_F
\EEQ
in the variable $\alpha\in\reals^n$, where $(Y\alpha)(Y\alpha)^T$ is the rank one matrix with coefficients $y_i\alpha_i\alpha_jy_j$.  Problem (\ref{eq:kern_learn2}) can be cast as an eigenvalue optimization problem in the variable $\alpha$.  Letting the eigenvalue decomposition of $K_0+(Y\alpha)(Y\alpha)^T/(4\rho)$ be $VDV^T$, we get $K^*=VD_+V^T$, and with $v_i$ the $i^{th}$ column of $V$, we can write
\BEAS
\Tr(K^*(Y\alpha)(Y\alpha)^T)&=&(Y\alpha)^TVD_+V^T(Y\alpha)\\
&=&\sum_i{\max\left(0,\lambda_i\left(K_0+\frac{1}{4\rho}(Y\alpha)(Y\alpha)^T\right)\right)(\alpha^TYv_i)^2}.
\EEAS
Using the same technique, we can also rewrite the term $\|K^*-K_0\|^2_F$ using this eigenvalue decomposition. Our original optimization problem (\ref{eq:kern_learn}) finally becomes
\setlength{\extrarowheight}{1ex}
\BEQ \label{eq:kern_learn_eig} \BA{ll}
\mbox{maximize} & \alpha^Te
-\frac{1}{2}\sum_i{\max(0,\lambda_i(K_0+(Y\alpha)(Y\alpha)^T/4\rho))(\alpha^TYv_i)^2}\\
&+\rho\sum_i{(\max(0,\lambda_i(K_0+(Y\alpha)(Y\alpha)^T/4\rho)))^2}\\
&-2\rho\sum_i\Tr((v_iv_i^T)K_0){\max(0,\lambda_i(K_0+(Y\alpha)(Y\alpha)^T/4\rho))}
+\rho\Tr(K_0K_0)\\
\mbox{subject to} & \alpha^Ty=0,0\leq\alpha\leq C
\EA\EEQ
in the variable $\alpha\in\reals^n$. By construction, the objective function is concave, hence (\ref{eq:kern_learn_eig}) is a convex optimization problem in $\alpha$.

A reformulation of problem (\ref{eq:kern_learn}) appears in \citeasnoun{Chen2008} where the authors move the inner minimization problem to the constraints and get the following semi-infinite quadratically constrained linear program (SIQCLP):
\BEQ \label{eq:SIQCLP} \BA{lll}
\mbox{maximize} & t & \\
\mbox{subject to} & \alpha^Ty=0,0\leq\alpha\leq C & \\
& t\leq \alpha^Te-
\frac{1}{2}\Tr(K(Y\alpha)(Y\alpha)^T)+\rho\|K-K_0\|^2_F & \forall K\succeq 0 .
\EA \EEQ
In Section \ref{s:algos}, we describe algorithms to solve our eigenvalue optimization problem in (\ref{eq:kern_learn_eig}), as well as an algorithm from \citeasnoun{Chen2008} that solves the different formulation in (\ref{eq:SIQCLP}), for completeness.

\subsection{Interpretation}
\label{ss:interpretation}
Our explicit solution of the optimal kernel given in (\ref{eq:kstar}) is the projection of a penalized rank-one update to the indefinite kernel on the cone of positive semidefinite matrices.  As $\rho$ tends to infinity, the rank-one update has less effect and in the limit, the optimal kernel is given by zeroing out the negative eigenvalues of the indefinite kernel.  This means that if the indefinite kernel contains a very small amount of noise, the best positive semidefinite kernel to use with SVM in our framework is the positive part of the indefinite kernel.

This limit as $\rho$ tends to infinity also motivates a heuristic for transforming the kernel on the testing set. Since negative eigenvalues in the training kernel are thresholded to zero in the limit, the same transformation should occur for the test kernel.  Hence, to measure generalization performance, we update the entries of the full kernel corresponding to training instances by the rank-one update resulting from the optimal solution to (\ref{eq:kern_learn_eig}) and threshold the negative eigenvalues of the full kernel matrix to zero to produce a Mercer kernel on the test set.

\subsection{Dual problem}
\label{ss:dual} As discussed above, problems (\ref{eq:kern_learn_hard}) and (\ref{eq:kern_learn}) are dual.  The inner maximization in problem~(\ref{eq:kern_learn_hard}) is a quadratic program in $\alpha$, whose dual is the quadratic minimization problem
\BEQ \label{eq:svm_dual} \BA{ll}
\mbox{minimize} & \frac{1}{2}(\-e-\delta+\mu+y\nu)^T(YKY)^{-1}(\-e-\delta+\mu+y\nu)+C\mu^Te\\
\mbox{subject to}&\delta,\mu \geq 0.\\
\EA \EEQ
Substituting (\ref{eq:svm_dual}) for the inner maximization in problem~(\ref{eq:kern_learn_hard}) allows us to write a joint minimization problem
\BEQ \label{eq:kern_learn_dual} \BA{ll}
\mbox{minimize} & \Tr(K^{-1}(Y^{-1}(\-e-\delta+\mu+y\nu))(Y^{-1}(\-e-\delta+\mu+y\nu))^T)/2 + C\mu^Te+\rho\|K-K_0\|^2_F\\
\mbox{subject to}& K\succeq0, \delta,\mu \geq 0\\
\EA \EEQ
in the variables $K\in\symm^n$, $\delta,\mu\in\reals^n$ and $\nu\in\reals$.  This is a quadratic program in the variables $\delta$, $\mu$ (which correspond to the constraints $0\leq\alpha\leq C$) and $\nu$ (which is the dual variable for the constraint $\alpha^Ty=0$). As we have seen earlier, any feasible solution $\alpha\in\reals^n$ produces a corresponding proxy kernel in (\ref{eq:kstar}). Plugging this kernel into problem~(\ref{eq:kern_learn_dual}) allows us to compute an upper bound on the optimum value of problem (\ref{eq:kern_learn}) by solving a simple quadratic program in the variables $\delta,~\mu,~\nu$. This result can then be used to bound the duality gap in (\ref{eq:kern_learn_eig}) and track convergence.

\section{Algorithms}
\label{s:algos}
We now detail two algorithms that can be used to solve problem (\ref{eq:kern_learn_eig}), which maximizes a nondifferentiable concave function subject to convex constraints.  An optimal point always exists since the feasible set is bounded and nonempty. For numerical stability, in both algorithms, we quadratically smooth our objective to compute a gradient. We first describe a simple projected gradient method which has numerically cheap iterations but less predictable performance in practice. We then show how to apply the analytic center cutting plane method, whose iterations are numerically more complex but which converges linearly.  For completeness, we also describe an exchange method from \citeasnoun{Chen2008} used to solve problem (\ref{eq:SIQCLP}), where the numerical bottleneck is a quadratically constrained linear program solved at each iteration.

\paragraph{Smoothing}  Our objective contains terms of the form $\max\{0,f(x)\}$ for some function $f(x)$, which are not differentiable (described in the section below). These functions are easily smoothed out by a Moreau-Yosida regularization technique (see \citeasnoun{Hiri96}, for example). We replace the max by a continuously differentiable $\frac{\epsilon}{2}$-approximation as follows:
\[ \varphi_\epsilon(f(x))=\max _{0 \le u\le 1}(uf(x)-\frac{\epsilon}{2}u^2).\]
The gradient is then given by $\nabla\varphi_\epsilon(f(x))=u^*(x)\nabla f(x)$ where $u^*(x)=\argmax\varphi_\epsilon(f(x))$.

\paragraph{Gradient}
Calculating the gradient of the objective function in (\ref{eq:kern_learn_eig}) requires computing the eigenvalue decomposition of a matrix of the form $X(\alpha)=K+\rho\alpha\alpha^T$. Given a matrix $X(\alpha)$, the derivative of the $i^{th}$ eigenvalue with respect to $\alpha$ is then given by
\BEQ
 \frac{\partial \lambda_i(X(\alpha))}{\partial \alpha} = v_i^T \frac{\partial X(\alpha)}{\partial \alpha}v_i
\EEQ
where $v_i$ is the $i^{th}$ eigenvector of $X(\alpha)$. We can then combine this expression with the smooth approximation above to obtain the gradient.

\subsection{Computing proxy kernels}
Because the proxy kernel in (\ref{eq:kstar}) only requires a rank one update of a (fixed) eigenvalue decomposition
\[
K^*=(K_0+(Y\alpha)(Y\alpha)^T/(4\rho))_+ ,
\]
we now briefly recall how $v_i$ and $\lambda_i(X(\alpha))$ can be computed efficiently in this case (see \citeasnoun{Demm1997} for further details).  We refer the reader to \citeasnoun{Kuli2006} for another kernel learning example using this method. Given the eigenvalue decomposition $X=VDV^T$, by changing basis this problem can be reduced to the decomposition of the diagonal plus rank-one matrix, $D+\rho uu^T$, where $u=V^T\alpha$. First, the updated eigenvalues are determined by solving the secular equations
\[
\det(D+\rho uu^T-\lambda I)=0,
\]
which can be done in $O(n^2)$.  While there is an explicit solution for the eigenvectors corresponding to these eigenvalues, they are not stable because the eigenvalues are approximated.  This instability is circumvented by computing a vector $\hat{u}$ such that approximate eigenvalues $\lambda$ are the exact eigenvalues of the matrix $D+\rho \hat{u}\hat{u}^T$, then computing its stable eigenvectors explicitly, where both steps can be done in $O(n^2)$ time.  The key is that $D+\rho \hat{u}\hat{u}^T$ is close enough to our original matrix so that the eigenvalues and eigenvectors are stable approximations of the true values.  Finally, the eigenvectors of our original matrix are computed as $VW$, with $W$ as the stable eigenvectors of $D+\rho \hat{u}\hat{u}^T$. Updating the eigenvalue decomposition is reduced to an $O(n^2)$ procedure plus one matrix multiplication, which is then the complexity of one gradient computation.

We note that eigenvalues of symmetric matrices are not differentiable when some of them have multiplicities greater than one (see \citeasnoun{Over1992} for a discussion), but a subgradient can be used instead of the gradient in all the algorithms detailed here. \citeasnoun{Lewi1999} shows how to compute an approximate subdifferential of the k-th largest eigenvalue of a symmetric matrix. This can then be used to form a regular subgradient of the objective function in (\ref{eq:kern_learn_eig}) which is concave by construction.

\subsection{Projected gradient method}
The projected gradient method takes a steepest descent step, then projects the new point back onto the feasible region (see \citeasnoun{Bert1999}, for example). We choose an initial point $\alpha_0\in\reals^n$ and the algorithm proceeds as in Algorithm \ref{alg:proj_grad}.

\begin{algorithm}
\caption{Projected gradient method}
\begin{algorithmic} [1]
\STATE Compute ${\alpha}_{i+1}=\alpha_i+t\nabla f(\alpha_i)$.
\STATE Set $\alpha_{i+1}=p_A({\alpha}_{i+1})$.
\STATE If $\mbox{gap} \le \epsilon$ stop, otherwise go back to step 1.
\end{algorithmic}
\label{alg:proj_grad}
\end{algorithm}

Here, we have assumed that the objective function is differentiable (after smoothing). The method is only efficient if the projection step is numerically cheap. The complexity of each iteration then breaks down as follows:

\par \noindent \emph{Step 1.} This requires an eigenvalue decomposition that is computed in $O(n^2)$ plus one matrix multiplication as described above.  Experiments below use a stepsize of $5/k$ for IndefiniteSVM and $10/k$ for PerturbSVM (described in Section \ref{ss:learn-mercer}) where $k$ is the iteration number.  A good stepsize is crucial to performance, and must be chosen separately for each data set as there is no rule of thumb.  We note that a line search would be costly here because it would require multiple eigenvalue decompositions to recalculate the objective multiple times.

\par \noindent \emph{Step 2.} This is a projection onto the region $A=\{\alpha^Ty=0,~0\leq\alpha\leq C\}$ and can be solved explicitly by sorting the vector of entries, with cost $O(n\log n)$.

\par \noindent \emph{Stopping Criterion.} We can compute a duality gap using the results of \S\ref{ss:dual} where
\[
K_i=(K_0+(Y\alpha_i)(Y\alpha_i)^T/(4\rho))_+\]
is the candidate kernel at iteration $i$ and we solve problem (\ref{eq:gen_perf_meas}), which simply means solving a SVM problem with the positive semidefinite kernel $K_i$, and produces an upper bound on (\ref{eq:kern_learn_eig}), hence a bound on the suboptimality of the current solution.

\par \noindent \emph{Complexity.} The number of iterations required by this method to reach a target precision of $\epsilon$ grows as $O(1/\epsilon^2)$. See \citeasnoun{Nest03a} for a complete discussion.

\subsection{Analytic center cutting plane method}
The analytic center cutting plane method (ACCPM) reduces the feasible region at each iteration using a new \emph{cut} computed by evaluating a subgradient of the objective function at the analytic center of the current feasible set, until the volume of the reduced region converges to the target precision.  This method does not require differentiability.  We set $\mathcal{L}_0=\{x\in\reals^n\mid x^Ty=0,0\leq x\leq C\}$, which we can write as $\{x\in\reals^n\mid A_0x\leq b_0\}$, to be our first localization set for the optimal solution.  The method is described in Algorithm \ref{alg:accpm} (see \citeasnoun{Bert1999} for a more complete treatment of cutting plane methods).

\begin{algorithm}
\caption{Analytic center cutting plane method}
\begin{algorithmic} [1]
\STATE Compute $\alpha_i$ as the analytic center of $\mathcal{L}_i$ by solving
    \[
    x_{i+1}=\argmin_{x\in\reals^n} ~ -\sum_{i=1}^m \mbox{log}(b_i-a_i^Tx)
    \]
    where $a_i^T$ represents the $i^{th}$ row of coefficients from the left-hand side of $\{x\in\reals^n\mid A_ix\leq b_0\}$.
\STATE Compute $\nabla f(x)$ at the center $x_{i+1}$ and update the (polyhedral) localization set
    \[
    \mathcal{L}_{i+1}=\mathcal{L}_i \cap \{\nabla f(x_{i+1})(x-x_{i+1}) \ge 0\}
    \]
    where $f$ is objective in problem (\ref{eq:kern_learn_eig}).
\STATE If $m\ge 3n$, reduce the number of constraints to $3n$.
\STATE If $\mbox{gap} \le \epsilon$ stop, otherwise go back to step 1.
\end{algorithmic}
\label{alg:accpm}
\end{algorithm}

\par \noindent The complexity of each iteration breaks down as follows:

\par \noindent \emph{Step 1.} This step computes the analytic center of a polyhedron and can be solved in $O(n^3)$ operations using interior point methods, for example.

\par \noindent \emph{Step 2.} This simply updates the polyhedral description.  It includes the gradient computation which again is $O(n^2)$ plus one matrix multiplication.

\par \noindent \emph{Step 3.} This step requires ordering the constraints according to their relevance in the localization set.  One relevance measure for the $j^{th}$ constraint at iteration $i$ is
\BEQ
\frac{a_j^T\nabla^2f(x_i)^{-1}a_j}{(a_j^Tx_i-b_j)^2}
\EEQ
where $f$ is the objective function of the analytic center problem. Computing the hessian is easy: it requires matrix multiplication of the form $A^TDA$ where $A$ is $m\times n$ (matrix multiplication is kept inexpensive in this step by pruning redundant constraints) and $D$ is diagonal.  Restricting the number of constraints to $3n$ is a rule of thumb; raising this limit increases the per iteration complexity while decreasing it increases the required number of iterations.

\par \noindent \emph{Stopping Criterion.}  An upper bound is computed by maximizing a first order Taylor approximation of $f(\alpha)$ at $\alpha_i$ over all points in an ellipsoid that covers $\mathcal{A}_i$, which can be computed explicitly.

\par \noindent \emph{Complexity.}  ACCPM is provably convergent in $O(n(\log 1/\epsilon)^2)$ iterations when using cut elimination, which keeps the complexity of the localization set bounded. Other schemes are available with slightly different complexities: a bound of $O(n^2/\epsilon^2)$ is achieved in \citeasnoun{Goff2002} using (cheaper) approximate centers, for example.

\subsection{Exchange method for SIQCLP}
The algorithm considered in \citeasnoun{Chen2008} in order to solve problem (\ref{eq:SIQCLP}) falls under a class of algorithms called exchange methods (as defined in \citeasnoun{Hett1993}).  These methods iteratively solve problems constrained by a finite subset of the infinitely many constraints, where the solution at each iterate gives an improved lower bound to the maximization problem.  The subproblem solved at each iteration here is
\BEQ \label{eq:SIQCLP_master} \BA{lll}
\mbox{maximize} & t & \\
\mbox{subject to} & \alpha^Ty=0,0\leq\alpha\leq C & \\
& t\leq \alpha^Te-
\frac{1}{2}\Tr(K_i(Y\alpha)(Y\alpha)^T)+\rho\|K_i-K_0\|^2_F & i=1,\ldots,p
\EA \EEQ
where $p$ is the number of constraints used to approximate the infinitely many constraints of problem (\ref{eq:SIQCLP}).  Let $(t_1,\alpha_1)$ be an initial solution found by solving problem (\ref{eq:SIQCLP_master}) with $p=1$ and $K_1=(K_0)_+$, where $K_0$ is the input indefinite kernel.  The algorithm proceeds as in Algorithm \ref{alg:exchange} below.

\begin{algorithm}
\caption{Exchange method}
\begin{algorithmic} [1]
\STATE Compute $K_{i+1}$ by solving the inner minimization problem of (\ref{eq:kern_learn}) as a function of $\alpha_i$.
\STATE Stop if
    \[\alpha_i^Te-
\frac{1}{2}\Tr(K_{i+1}(Y\alpha_i)(Y\alpha_i)^T)+\rho\|K_{i+1}-K_0\|^2_F\ge t_i .\]
\STATE Solve problem (\ref{eq:SIQCLP_master}) with an additional constraint using $K_{i+1}$ to get $(t_{i+1},\alpha_{i+1})$ and go back to step 1.
\end{algorithmic}
\label{alg:exchange}
\end{algorithm}

\par \noindent The complexity of each iteration breaks down as follows:

\par \noindent \emph{Step 1.} This step can be solved analytically using Theorem \ref{th:kstar}.  An efficient calculation of $K_{i+1}$ can be made as in the other algorithms above using an $O(n^2)$ procedure plus one matrix multiplication.

\par \noindent \emph{Step 2 (Stopping Criterion).} The previous point $(t_{i},\alpha_{i})$ is optimal if it is feasible with respect to the new constraint, in which case it is feasible for the infinitely many constraints of the original problem (\ref{eq:SIQCLP}) and hence also optimal.

\par \noindent \emph{Step3.} This step requires solving a QCLP with a number of quadratic constraints equivalent to the number of iterations.  As shown in \citeasnoun{Chen2008}, the QCLP can be written as a regularized version of the multiple kernel learning (MKL) problem from \citeasnoun{Lanc2004}, where the number of constraints here is equivalent to the number of kernels in MKL.  Efficient methods to solve MKL with many kernels is an active area of research, most recently in \citeasnoun{Rako2007}.  There, the authors use a gradient method to solve a reformulation of problem (\ref{eq:SIQCLP_master}) as a smooth maximization problem.  Each objective value and gradient computation requires computing a support vector machine, hence each iteration requires several SVM computations which can be speeded up using warm-starting.  Furthermore, \citeasnoun{Chen2008} prune inactive constraints at each iteration in order to decrease the number of constraints in the QCLP.

\par \noindent \emph{Complexity.}  No rate of convergence is known for this algorithm, but the duality gap given in \citeasnoun{Chen2008} is shown to monotonically decrease.

\subsection{Matlab Implementation}
The first two algorithms discussed here were implemented in Matlab for the cases of indefinite (IndefiniteSVM) and positive semidefinite (PerturbSVM) kernels and can be downloaded from the authors' webpages in a package called IndefiniteSVM.  The $\rho$ penalty parameter is one-dimensional in the implementation. This package makes use of the  LIBSVM code of \citeasnoun{CC01a} to produce suboptimality bounds and track convergence.  A Matlab implementation of the exchange method (due to the authors of \citeasnoun{Chen2008}) that uses MOSEK \cite{Mosek} to solve problem (\ref{eq:SIQCLP_master}) is compared against the projected gradient method in Section \ref{s:num-res}.

\section{Extensions}
\label{s:extensions} In this section, we extend our results to other kernel methods, namely support vector regressions and one-class support vector machines.  In addition, we apply our method to using Mercer kernels and show how to use more general penalties in our formulation.

\subsection{SVR with indefinite kernels}
\label{ss:svr-indkernels} The practicality of indefinite kernels in SVM classification similarly motivates using indefinite kernels in support vector regression (SVR).  We here extend the formulations in Section \ref{s:max-min} to SVR with linear $\epsilon$-insensitive loss
\BEQ \label{eq:gen_perf_meas_SVR}
\omega_C(K)=\max_{\{-C\leq\alpha\leq C,\alpha^Te=0\}}\alpha^Ty-\epsilon|\alpha |-\Tr(K\alpha\alpha^T)/2
\EEQ
where $\alpha\in\reals^n$ and $C$ is the SVR penalty parameter.  The indefinite SVR formulation follows directly as in Section \ref{ss:indKernelLearning} and the optimal kernel is learned by solving
\BEQ \label{eq:kern_learn_SVR}
\max_{\{\alpha^Te=0,-C\leq\alpha\leq C\}} ~ \min_{\{K\succeq0\}} ~ \alpha^Ty-\epsilon|\alpha |-
\frac{1}{2}\Tr(K\alpha\alpha^T)+\rho\|K-K_0\|^2_F
\EEQ
in the variables $K\in\symm^n$ and $\alpha\in\reals^n$, where the parameter $\rho >0$ controls the magnitude of the penalty on the distance between $K$ and $K_0$.  The following corollary to Theorem \ref{th:kstar} provides the solution to the inner minimization problem in (\ref{eq:kern_learn_SVR})
\begin{corollary} \label{cor:kstar_SVR}
Given a similarity matrix $K_0\in S^n$ and a vector $\alpha\in\reals^n$ of support vector coefficients, the optimal kernel in problem (\ref{eq:kern_learn_SVR}) can be computed explicitly as
\BEQ\label{eq:kstar_SVR}
K^*=(K_0+\alpha\alpha^T/(4\rho))_+
\EEQ
where $\rho\geq 0$ controls the penalty.
\end{corollary}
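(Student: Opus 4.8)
The plan is to reuse the proof of Theorem~\ref{th:kstar} essentially verbatim, the only change being the substitution of the rank-one matrix $(Y\alpha)(Y\alpha)^T$ by $\alpha\alpha^T$. First I would fix $\alpha$ and isolate the inner minimization over $K$ in~(\ref{eq:kern_learn_SVR}). The SVR loss terms $\alpha^Ty$ and $-\epsilon|\alpha|$ do not depend on $K$, so they are constant in the inner problem and may be set aside; what remains is
\[
\min_{\{K\succeq0\}}~ -\frac{1}{2}\Tr(K\alpha\alpha^T)+\rho\|K-K_0\|^2_F ,
\]
which has exactly the form of the inner problem treated in Theorem~\ref{th:kstar}, with the rank-one matrix $(Y\alpha)(Y\alpha)^T$ arising from the SVM dual~(\ref{eq:gen_perf_meas}) replaced by the rank-one matrix $\alpha\alpha^T$ arising from the SVR dual~(\ref{eq:gen_perf_meas_SVR}). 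The sole structural difference is that the label matrix $Y$ is absent from the SVR quadratic penalty $\Tr(K\alpha\alpha^T)$.

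Next I would expand $\|K-K_0\|^2_F=\Tr((K-K_0)^T(K-K_0))$ and collect terms, so that the $K$-dependent part of the objective reads $\rho\left(\Tr(K^TK)-2\Tr(K^T(K_0+\frac{1}{4\rho}\alpha\alpha^T))+\Tr(K_0^TK_0)\right)$. Completing the square by adding and subtracting the constant $\rho\Tr((K_0+\frac{1}{4\rho}\alpha\alpha^T)^T(K_0+\frac{1}{4\rho}\alpha\alpha^T))$ and discarding all terms that do not involve $K$, the inner minimization becomes equivalent to
\[
\begin{array}{ll}
\mbox{minimize} & \left\|K-\left(K_0+\frac{1}{4\rho}\alpha\alpha^T\right)\right\|_F^2\\
\mbox{subject to} & K\succeq0
\end{array}
\]
in the variable $K\in\symm^n$.

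Finally I would invoke the same identification used in Theorem~\ref{th:kstar}: the minimizer of the Frobenius distance to a fixed symmetric matrix over the positive semidefinite cone is the Euclidean projection onto that cone, computed by zeroing the negative eigenvalues. This yields $K^*=(K_0+\alpha\alpha^T/(4\rho))_+$, which is precisely~(\ref{eq:kstar_SVR}). Since the argument is a direct mirror of Theorem~\ref{th:kstar}, I do not anticipate any genuine obstacle; the only point requiring care is the bookkeeping check that the $K$-dependent portion of the SVR objective coincides with the SVM case after the substitution $(Y\alpha)(Y\alpha)^T\mapsto\alpha\alpha^T$, so that the constant SVR loss terms play no role in the projection.
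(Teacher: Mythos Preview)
Your proposal is correct and matches the paper's approach exactly: the paper simply states that the proof follows directly as in Theorem~\ref{th:kstar}, the only difference being that the label vector $y$ does not appear in the optimal kernel. Your write-up spells out the completion-of-the-square and projection steps in more detail than the paper does, but the argument is the same.
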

The proof follows directly as in Theorem \ref{th:kstar}; the slight difference is that the vector of labels $y$ does not appear in the optimal kernel.  Plugging in (\ref{eq:kstar_SVR}) into (\ref{eq:kern_learn_SVR}), the resulting formulation can be rewritten as the convex eigenvalue optimization problem
\setlength{\extrarowheight}{1ex}
\BEQ \label{eq:kern_learn_eig_SVR} \BA{ll}
\mbox{maximize} & \alpha^Ty-\epsilon|\alpha |
-\frac{1}{2}\sum_i{\max(0,\lambda_i(K_0+\alpha\alpha^T/(4\rho)))(\alpha^Tv_i)^2}\\
&+\rho\sum_i{(\max(0,\lambda_i(K_0+\alpha\alpha^T/4\rho)))^2}\\
&-2\rho\sum_i\Tr((v_iv_i^T)K_0){\max(0,\lambda_i(K_0+\alpha\alpha^T/(4\rho)))}
+\rho\Tr(K_0K_0)\\
\mbox{subject to} & \alpha^Te=0,-C\leq\alpha\leq C\\
\EA\EEQ
in the variable $\alpha\in\reals^n$.  Again, a proxy kernel given by (\ref{eq:kstar_SVR}) can be produced  from any feasible solution $\alpha\in\reals^n$.  Plugging the proxy kernel into problem~(\ref{eq:kern_learn_SVR}) allows us to compute an upper bound on the optimum value of problem (\ref{eq:kern_learn_SVR}) by solving a support vector regression problem.

\subsection{One-class SVM with indefinite kernels}
\label{ss:1class_svm-indkernels}  The same reformulation can also be applied to one-class support vector machines which have the formulation (see \citeasnoun{Scho2002})
\BEQ \label{eq:gen_perf_meas_1class_svm}
\omega_\nu(K)=\max_{\{0\leq\alpha\leq \frac{1}{\nu l},\alpha^Te=1\}}-\Tr(K\alpha\alpha^T)/2
\EEQ
where $\alpha\in\reals^n$, $\nu$ is the one-class SVM parameter, and $l$ is the number of training points.  The indefinite one-class SVM formulation follows again as done for binary SVM and SVR; the optimal kernel is learned by solving
\BEQ \label{eq:kern_learn_1class_SVM}
\max_{\{\alpha^Te=1,0\leq\alpha\leq \frac{1}{\nu l}\}} ~ \min_{\{K\succeq0\}} ~ -
\frac{1}{2}\Tr(K\alpha\alpha^T)+\rho\|K-K_0\|^2_F
\EEQ
in the variables $K\in\symm^n$ and $\alpha\in\reals^n$.  The inner minimization problem is identical to that of indefinite SVR and the optimal kernel has the same form as given in Corollary \ref{cor:kstar_SVR}.  Plugging (\ref{eq:kstar_SVR}) into (\ref{eq:kern_learn_1class_SVM}) gives another convex eigenvalue optimization problem
\setlength{\extrarowheight}{1ex}
\BEQ \label{eq:kern_learn_eig_1class_SVM} \BA{ll}
\mbox{maximize} &
-\frac{1}{2}\sum_i{\max(0,\lambda_i(K_0+\alpha\alpha^T/4\rho))(\alpha^Tv_i)^2}\\
&+\rho\sum_i{(\max(0,\lambda_i(K_0+\alpha\alpha^T/(4\rho))))^2}\\
&-2\rho\sum_i\Tr((v_iv_i^T)K_0){\max(0,\lambda_i(K_0+\alpha\alpha^T/(4\rho)))}
+\rho\Tr(K_0K_0)\\
\mbox{subject to} & \alpha^Te=1,0\leq\alpha\leq \frac{1}{\nu l}\\
\EA\EEQ
in the variable $\alpha\in\reals^n$, which is identical to (\ref{eq:kern_learn_eig_SVR}) without the first two terms in the objective and slightly different constraints.  The algorithm follows almost directly the same as above for the indefinite SVR formulation.

%
%

\subsection{Learning from Mercer kernels}
\label{ss:learn-mercer} While our central motivation is to use indefinite kernels for SVM classification, one would also like to analyze what happens when a Mercer kernel is used as input in (\ref{eq:kern_learn}). In this case, we learn another kernel that decreases the upper bound on generalization performance and produces perturbed support vectors.  We can again interpret the input as a noisy kernel, and as such, one that will achieve suboptimal performance.  If the input kernel is the best kernel to use (i.e. is not noisy), we will observe that our framework achieves optimal performance as $\rho$ tends to infinity (through cross validation), otherwise we simply learn a better kernel using a finite $\rho$.

When the similarity measure $K_0$ is positive semidefinite, the proxy kernel $K^*$ in Theorem \ref{th:kstar} simplifies to a rank-one update of $K_0$
\BEQ\label{eq:kstar_psd}
K^*=K_0+(Y\alpha^*)(Y\alpha^*)^T/(4\rho)
\EEQ
whereas, for indefinite $K_0$, the solution was to project this matrix on the cone of positive semidefinite matrices.  Plugging (\ref{eq:kstar_psd}) into problem (\ref{eq:kern_learn}) gives:
\BEQ\label{eq:learn-mercer}
\DS \max_{\{\alpha^Ty=0,~0\leq\alpha\leq C\}} ~ \alpha^Te
-\frac{1}{2}\Tr(K_0(Y\alpha)(Y\alpha)^T)-\frac{1}{16\rho}\sum_{i,j}{(\alpha_i\alpha_j)^2} ,
\EEQ
which is the classic SVM problem given in (\ref{eq:gen_perf_meas}) with a fourth order penalty on the support vectors. For testing in this framework, we do not need to transform the kernel, only the support vectors are perturbed. In this case, computing the gradient no longer requires eigenvalue decompositions at each iteration.  Experimental results are shown in Section \ref{s:num-res}.

\subsection{Componentwise penalties}
Indefinite SVM can be generalized further with componentwise penalties on the distance between the proxy kernel and the indefinite kernel $K_0$.  We generalize problem (\ref{eq:kern_learn}) to
\BEQ \label{eq:kern_learn_pen}
\max_{\{\alpha^Ty=0,0\leq\alpha\leq C\}} ~ \min_{\{K\succeq0\}} ~ \alpha^Te-
\frac{1}{2}\Tr(K(Y\alpha)(Y\alpha)^T)+\sum_{i,j}H_{ij}(K_{ij}-K_{0ij})^2
\EEQ
where $H$ is now a matrix of varying penalties on the componentwise distances.  For a specific class of penalties, the optimal kernel $K^*$ can be derived explicitly as follows.
\begin{theorem} \label{th:kstar_pen}
Given a similarity matrix $K_0\in S^n$, a vector $\alpha\in\reals^n$ of support vector coefficients and the label matrix $Y=\diag(y)$, when $H$ is rank-one with $H_{ij}=h_ih_j$, the optimal kernel in problem (\ref{eq:kern_learn_pen}) has the explicit form
\BEQ\label{eq:kstar_pen}
K^*=W^{-1/2}((W^{1/2}(K_0+\frac{1}{4}(W^{-1}Y\alpha^*)(W^{-1}Y\alpha^*)^T)W^{1/2})_+)W^{-1/2}
\EEQ
where $W$ is the diagonal matrix with $W_{ii}=h_i$.
\end{theorem}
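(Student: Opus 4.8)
The plan is to reduce this generalized problem back to the setting already solved in Theorem \ref{th:kstar} by a congruence change of variables that absorbs the rank-one penalty weights. The starting observation is that, because $H_{ij}=h_ih_j$, the componentwise penalty factors cleanly. Writing $W=\diag(h)$ and assuming $h_i>0$, so that $W^{1/2}$ and $W^{-1/2}$ are well defined, one has
\[
\sum_{i,j}h_ih_j(K_{ij}-K_{0ij})^2=\sum_{i,j}\left(h_i^{1/2}(K_{ij}-K_{0ij})h_j^{1/2}\right)^2=\|W^{1/2}(K-K_0)W^{1/2}\|^2_F .
\]
Thus the arbitrary componentwise penalty collapses back into a single Frobenius term, but now measured after a symmetric scaling.

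First I would introduce the new matrix variable $\tilde K=W^{1/2}KW^{1/2}$. Since $W^{1/2}$ is diagonal and invertible, this congruence is a bijection on $\symm^n$ that maps the cone $\{K\succeq0\}$ onto itself, so the constraint is preserved and I can optimize freely over $\tilde K\succeq0$. Setting $\tilde K_0=W^{1/2}K_0W^{1/2}$, the penalty above is exactly $\|\tilde K-\tilde K_0\|^2_F$. For the quadratic term I would rewrite $\Tr(K(Y\alpha)(Y\alpha)^T)=(Y\alpha)^TK(Y\alpha)$ and substitute $K=W^{-1/2}\tilde K W^{-1/2}$, giving $(W^{-1/2}Y\alpha)^T\tilde K(W^{-1/2}Y\alpha)=\Tr(\tilde K\tilde\beta\tilde\beta^T)$ with $\tilde\beta=W^{-1/2}Y\alpha$.

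With these substitutions the inner minimization over $\tilde K\succeq0$ is term-for-term identical to the inner problem of Theorem \ref{th:kstar}, with $K_0$ replaced by $\tilde K_0$, the vector $Y\alpha$ replaced by $\tilde\beta$, and penalty weight $\rho=1$. I would then simply invoke Theorem \ref{th:kstar} to conclude $\tilde K^*=(\tilde K_0+\frac14\tilde\beta\tilde\beta^T)_+$. The final step is to undo the change of variables, $K^*=W^{-1/2}\tilde K^*W^{-1/2}$, and to repackage $\tilde\beta\tilde\beta^T$ inside the scaling: using $W^{-1/2}Y\alpha=W^{1/2}(W^{-1}Y\alpha)$ one gets $\tilde K_0+\frac14\tilde\beta\tilde\beta^T=W^{1/2}\big(K_0+\frac14(W^{-1}Y\alpha)(W^{-1}Y\alpha)^T\big)W^{1/2}$, which yields precisely the claimed formula (\ref{eq:kstar_pen}).

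The only genuine subtlety, and the step I would be most careful about, is the invertibility of the scaling: the argument needs $h_i\neq0$ for $W^{\pm1/2}$ to exist and for the congruence to be a bijection preserving the positive semidefinite cone. I would state this positivity assumption explicitly, treating vanishing $h_i$ (which switches off the penalty on the corresponding row and column) as a degenerate limiting case. Everything else is routine algebraic bookkeeping that mirrors the completion-of-squares argument already used for Theorem \ref{th:kstar}.
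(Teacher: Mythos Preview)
Your proof is correct and takes a somewhat different route from the paper's. The paper first completes the square to reach the weighted projection problem $\min_{K\succeq0}\|H^{1/2}\circ(K-M)\|_F^2$, with $M=K_0+\frac{1}{4H}\circ(Y\alpha)(Y\alpha)^T$ and $\circ$ the Hadamard product, and then invokes Theorem~3.2 of Higham (2002) as a black box to solve this weighted nearest-PSD problem in the rank-one case $H=hh^T$. You instead absorb the weights at the outset via the congruence $\tilde K=W^{1/2}KW^{1/2}$, which turns the weighted Frobenius penalty into an unweighted one and reduces everything to the already-proved Theorem~\ref{th:kstar} with $\rho=1$. Your argument is therefore self-contained and bypasses the external citation; the paper's formulation is a touch more general in that it poses the problem for arbitrary $H$ before specializing, but it only obtains a closed form in the rank-one case anyway. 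Your explicit positivity assumption $h_i>0$ is exactly right and is also implicit in the paper's appeal to Higham's result.
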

\begin{proof}
The inner minimization problem to problem (\ref{eq:kern_learn_pen}) can be written out as
\[
\min_{\{K\succeq0\}} ~
\sum_{i,j}H_{ij}(K_{ij}^2-2K_{ij}K_{0ij}+K_{0ij}^2)-\frac{1}{2}\sum_{i,j}y_iy_j\alpha_i\alpha_jK_{i,j} .
\]
Adding and subtracting $\sum_{i,j}H_{ij}(K_{0ij}+\frac{1}{4H_{ij}}y_iy_j\alpha_i\alpha_j)^2$, combining similar terms, and removing remaining constants gives
\[ \BA {ll}
\mbox{minimize} & \|H^{1/2}\circ(K-(K_0+\frac{1}{4H}\circ(Y\alpha)(Y\alpha)^T))\|^2_F\\
\mbox{subject to} & K\succeq0
\EA \]
where $\circ$ denotes the Hardamard product, $(A\circ B)_{ij}=a_{ij}b_{ij}$, $(H^{1/2})_{ij}=H_{ij}^{1/2}$, and $(\frac{1}{4H})_{ij}=\frac{1}{4H_{ij}}$. This is a weighted projection problem where $H_{ij}$ is the penalty on $(K_{ij}-K_{0ij})^2$.  Since $H$ is rank-one, the result follows from Theorem 3.2 of \citeasnoun{High2002}.
\end{proof}

Notice that Theorem \ref{th:kstar_pen} is a generalization of Theorem \ref{th:kstar} where we had $H=ee^T$.  In constructing a rank-one penalty matrix $H$, we simply assign penalties to each training point.   The componentwise penalty formulation can also be extended to true kernels.  If $K_0\succeq0$, then $K^*$ in Theorem \ref{th:kstar_pen} simplifies to a rank-one update of $K_0$:
\BEQ\label{eq:kstar_pen_psd}
K^*=K_0+\frac{1}{4}(W^{-1/2}Y\alpha)(W^{-1/2}Y\alpha)^T
\EEQ
where no projection is required.

\section{Experiments}
\label{s:num-res}
In this section we compare the generalization performance of our technique to other methods applying SVM classification to indefinite similarity measures. We also examine classification performance using Mercer kernels. We conclude with experiments showing convergence of our algorithms. All experiments on Mercer kernels use the LIBSVM library.

\subsection{Generalization with indefinite kernels}
We compare our method for SVM classification with indefinite kernels to several  kernel preprocessing techniques discussed earlier. The first three techniques perform spectral transformations on the indefinite kernel.  The first, called \emph{denoise} here, thresholds the negative eigenvalues to zero.  The second transformation, called \emph{flip}, takes the absolute value of all eigenvalues.  The last transformation, \emph{shift}, adds a constant to each eigenvalue, making them all positive.  See \citeasnoun{Wu2005} for further details.  We also implemented an SVM modification (denoted \emph{Mod SVM}) suggested in \citeasnoun{Lin2003} where a nonconvex quadratic objective function is made convex by replacing the indefinite kernel with the identity matrix.  The kernel only appears in linear inequality constraints that separate the data.  Finally, we compare our results with a direct use of SVM classification on the original indefinite kernel (SVM converges but the solution is only a stationary point and not guaranteed to be optimal).

We first experiment on data from the USPS handwritten digits database  \citeasnoun{Hul1994} using the indefinite Simpson score and the one-sided tangent distance kernel to compare two digits.  The tangent distance is a transformation invariant measure---it assigns high similarity between an image and slightly rotated or shifted instances---and is known to perform very well on this data set.  Our experiments symmetrize the one-sided tangent distance using the square of the mean tangent distance defined in \citeasnoun{Haas2002} and make it a similarity measure by negative exponentiation.  We also consider the Simpson score for this task, which is much cheaper to compute (a ratio comparing binary pixels).  We finally analyze three data sets (diabetes, german and ala) from the UCI repository \cite{Asun2007} using the indefinite sigmoid kernel.

The data is randomly divided into training and testing data.  We apply 5-fold cross validation and use an average of the accuracy and recall measures (described below) to determine the optimal parameters $C$, $\rho$, and any kernel inputs.  We then train a model with the full training set and optimal parameters and test on the independent test set.

\begin{table}[h!]
\BC
\small{
\begin{tabular}{|c|c|c|c|c|}
\hline
  Data Set & \# Train & \# Test & $\lambda_{min}$ & $\lambda_{max}$\\
\hline
USPS-3-5-SS & 767 & 773 & -70.00 & 903.94 \\
\hline
USPS-3-5-TD1 & 767 & 773 & -0.31 & 764.72 \\
\hline
USPS-4-6-SS & 829 & 857 & -74.38 & 819.36 \\
\hline
USPS-4-6-TD1 & 829 & 857 & -0.72 & 771.07 \\
\hline
diabetes-sig & 384 & 384  & -.65 & 211.62 \\
\hline
german-sig & 500 & 500 & -928.10 & 8.50 \\
\hline
a1a-sig & 803 & 802 & -.01 & 84.44 \\
\hline
\end{tabular}
}
\EC
\caption{Summary statistics for the various data sets used in our experiments. The USPS data comes from the USPS handwritten digits database, the other data sets are taken from the UCI repository.  \textit{SS} refers to the Simpson kernel, \textit{TD1} to the one-sided tangent distance kernel, and \textit{sig} to the sigmoid kernel.  Training and testing sets were divided randomly.  Notice that the Simpson kernels are mostly highly indefinite while the one-sided tangent distance kernel is nearly positive semidefinite.  The sigmoid kernel is highly indefinite depending on the parametrization.  Statistics for sigmoid kernels refer to the optimal kernel parameterized under cross validation with Indefinite SVM. Spectrums are based on the full kernel, i.e. combining training and testing data.}

\label{table:data_stats}
\end{table}

\begin{table}[h!]
\begin{center}
\small{
\begin{tabular}{|c|c|c|c|c|c|c|c|}
\hline
  Data Set & Measure & Denoise & Flip & Shift & Mod SVM & SVM & Indefinite SVM \\
\hline
 \multirow{3}{*}{USPS-3-5-SS}
&Accuracy & 95.47 & 95.21 & 93.27 & \textbf{96.12} & 69.47 & 95.73\\
&Recall & 94.50 & 94.50 & 94.98 & 96.17 & 67.94 & \textbf{97.13}\\
&Average & 94.98 & 94.86 & 94.12 & 96.15 & 68.71 & \textbf{96.43}\\
\hline
 \multirow{3}{*}{USPS-3-5-TD1}
&Accuracy & \textbf{98.58} & 98.45 & \textbf{98.58} & 98.19 & \textbf{98.58} & 98.45\\
&Recall & \textbf{98.56} & 98.33 & \textbf{98.56} & 97.85 & \textbf{98.56} & 98.33\\
&Average & \textbf{98.57} & 98.39 & \textbf{98.57} & 98.02 & \textbf{98.57} & 98.39\\
\hline
 \multirow{3}{*}{USPS-4-6-SS}
&Accuracy & \textbf{98.60} & 98.25 & 96.73 & \textbf{98.60} & 84.36 & 98.25\\
&Recall & 99.32 & 99.32 & 96.61 & 99.32 & 81.72 & \textbf{99.77}\\
&Average & 98.96 & 98.79 & 96.67 & 98.96 & 83.04 & \textbf{99.01}\\
\hline
 \multirow{3}{*}{USPS-4-6-TD1}
&Accuracy & \textbf{99.30} & \textbf{99.30} & 99.18 & 99.18 & \textbf{99.30} & \textbf{99.30}\\
&Recall & \textbf{99.77} & \textbf{99.77} & 99.55 & 99.55 & \textbf{99.77} & \textbf{99.77}\\
&Average & \textbf{99.54} & \textbf{99.54} & 99.37 & 99.37 & \textbf{99.54} & \textbf{99.54}\\
\hline
 \multirow{3}{*}{diabetes-sig}
&Accuracy & 74.48 & 74.74 & 76.56 & 76.04 & 73.70 & \textbf{77.08}\\
&Recall & 78.40 & 76.80 & \textbf{89.60} & 78.40 & 76.40 & 89.20\\
&Average & 76.44 & 75.77 & 83.08 & 77.22 & 75.05 & \textbf{83.14}\\
\hline
 \multirow{3}{*}{german-sig}
&Accuracy & 70.40 & 70.40 & \textbf{75.60} & 72.60 & 69.40 & 62.80\\
&Recall & 78.00 & 78.00 & 46.67 & 66.00 & 80.00 & \textbf{85.33}\\
&Average & 74.20 & 74.20 & 61.13 & 69.30 & \textbf{74.70} & 74.07\\
\hline
 \multirow{3}{*}{a1a-sig}
&Accuracy & 74.06 & 76.18 & 75.69 & 78.55 & 75.69 & \textbf{82.92}\\
&Recall & 87.31 & 87.82 & 87.31 & \textbf{89.34} & 87.82 & 81.73\\
&Average & 80.69 & 82.00 & 81.50 & \textbf{83.95} & 81.75 & 82.32\\
\hline
\end{tabular}
}
\end{center}
\caption{Indefinite SVM performs favorably for the highly indefinite Simpson kernels.  Performance is comparable for the nearly positive semidefinite one-sided tangent distance kernel.  Comparable performance with sigmoid kernels is more consistent with indefinite SVM across data sets.  The performance measures are: $\mbox{Accuracy}=\frac{TP+TN}{TP+TN+FP+FN}$, $\mbox{Recall}=\frac{TP}{TP+FN}$, and $\mbox{Average}=(\mbox{Accuracy}+\mbox{Recall})/2$.}
\label{table:gen_performance}
\end{table}

Table \ref{table:data_stats} provides summary statistics for these data sets, including the minimum and maximum eigenvalues of the training similarity matrices. We observe that the Simpson are highly indefinite, while the one-sided tangent distance kernel is nearly positive semidefinite.  The spectrum of sigmoid kernels varies greatly across examples because it is very sensitive to the sigmoid kernel parameters.  Table \ref{table:gen_performance} compares accuracy, recall, and their average for denoise, flip, shift, modified SVM, direct SVM and the indefinite SVM algorithm described in this work.

Based on the interpretation from Section \ref{ss:interpretation}, Indefinite SVM should be expected to perform at least as well as denoise; if denoise were a good transformation, then cross-validation over $\rho$ should choose a high penalty that makes Indefinite SVM and denoise nearly equivalent.  The rank-one update provides more flexibility for the transformation and similarities concerning data points $x_i$ that are easily classified ($\alpha_i=0$) are not modified by the rank-one update.  Further interpretation for the specific rank-one update is not currently known.  However, \citeasnoun{Chen2009} recently proposed spectrum modifications in a similar manner to Indefinite SVM.  Rather than perturb the entire indefinite similarity matrix, they perturb the spectrum directly allowing improvements over the denoise as well as flip transformations.  They also note that Indefinite SVM might perform better on sparse kernels because the rank-one update may then allow inference of hidden relationships.

We observe that Indefinite SVM performs comparably on all USPS examples (slightly better for the highly indefinite Simpson kernels), which are relatively easy classification problems.  As expected, classification using the tangent distance outperforms classification with the Simpson score but, as mentioned above, the Simpson score is cheaper to compute.  We also note that other documented classification results on this USPS data set perform multi-classification, while here we only perform binary classification.  Classification of the UCI data sets with sigmoid kernels is more difficult (as demonstrated by lower performance measures).  Indefinite SVM here is the only technique that outperforms in at least one of the measures across all three data sets.

\subsection{Generalization with Mercer kernels}
Using this time linear and gaussian (both positive semidefinite, i.e. Mercer) kernels on the USPS data set, we now compare classification performance using regular SVM and the penalized kernel learning problem (\ref{eq:learn-mercer}) of Section \ref{ss:learn-mercer}, which we call PerturbSVM here. We also test these two techniques on positive semidefinite kernels formed using noisy USPS data sets (created by adding uniformly distributed noise in [-1,1] to each pixel before normalizing to [0,1]), in which case PerturbSVM can be seen as optimally denoised support vector machine classification.  We again cross-validate on a training set and test on the same independent group of examples used in the experiments above.  Optimal parameters from classification of unperturbed data were used to train classifiers for perturbed data.  Results are summarized in Table \ref{table:gen_performance_perturbSVM}.

These results show that PerturbSVM performs at least as well in almost all cases. As expected, noise decreased generalization performance in all experiments.  Except in the \textit{USPS-4-6-gaussian} example, the value of $\rho$ selected was not the highest possible for each test where PerturbSVM outperforms SVM in at least one measure; this implies that the support vectors were perturbed to improve classification.  Overall, when zero or moderate noise is present, PerturbSVM does improve performance over regular SVM as shown. When too much noise is present however (for example, pixel data with range in [-1,1] was modified with uniform noise in [-2,2] before being normalized to [0,1]), the performance of both techniques is comparable.

\begin{table}[h!]
\begin{center}
\small{
\begin{tabular}{|c|c|c|c|c|c|}
\hline
   & & \multicolumn{2}{|c|}{Unperturbed} & \multicolumn{2}{|c|}{Noisy} \\ \hline
  Data Set & Measure & SVM & Perturb SVM & SVM & Perturb SVM\\
\hline
 \multirow{3}{*}{USPS-3-5-linear}
&Accuracy & \textbf{96.25} & 96.12 & 90.27 & \textbf{93.16}\\
&Recall & 95.69 & \textbf{95.93} & 90.00 & \textbf{92.87}\\
&Average & 95.97 & \textbf{96.03} & 90.14 & \textbf{93.01}\\
\hline
 \multirow{3}{*}{USPS-4-6-linear}
&Accuracy & \textbf{99.07} & \textbf{99.07} & 97.39 & \textbf{97.97}\\
&Recall & 99.10 & \textbf{99.32} & 97.34 & \textbf{98.13}\\
&Average & 99.08 & \textbf{99.19} & 97.36 & \textbf{98.05}\\
\hline
 \multirow{3}{*}{USPS-3-5-gaussian}
&Accuracy & \textbf{97.67} & 97.54 & 92.11 & \textbf{93.57}\\
&Recall & \textbf{98.09} & 97.37 & 91.27 & \textbf{92.89}\\
&Average & \textbf{97.88} & 97.46 & 91.69 & \textbf{93.23}\\
\hline
 \multirow{3}{*}{USPS-4-6-gaussian}
&Accuracy & 99.18 & \textbf{99.30} & \textbf{98.00} & 97.99\\
&Recall & \textbf{99.55} & \textbf{99.55} & 98.15 & \textbf{98.19}\\
&Average & 99.37 & \textbf{99.42} & 98.08 & \textbf{98.09}\\
\hline
\end{tabular}
}
\end{center}
\caption{Performance measures for USPS data using linear and gaussian kernels. \textit{Unperturbed} refers to classification of the original data and \textit{Noisy} refers to classification of data that is perturbed by uniform noise. Perturb SVM perturbs the support vectors to improve generalization.  However, performance is lower for both techniques in the presence of high noise.
}
\label{table:gen_performance_perturbSVM}
\end{table}

\subsection{Convergence}
We ran our two algorithms on data sets created by randomly perturbing the four USPS data sets used above. Average results and standard deviation are displayed in Figure \ref{fig:convergence_figures} in semilog scale (note that the codes were not stopped here and that the target duality gap improvement is usually much smaller than $10^{-8}$). As expected, ACCPM converges much faster (in fact linearly) to a higher precision, while each iteration requires solving a linear program of size $n$. The gradient projection method converges faster in the beginning but stalls at higher precision, however each iteration only requires a rank one update on an eigenvalue decomposition.

\begin{figure} [h!]
\begin{tabular*}{\textwidth}{@{\extracolsep{\fill}}cc}
\psfrag{gap}[b][t]{\small{Duality Gap}}
\psfrag{iter}[t][b]{\small{Iteration}}
\psfrag{accpm}[b]{\small{ACCPM}}
\includegraphics[width=.50 \textwidth]{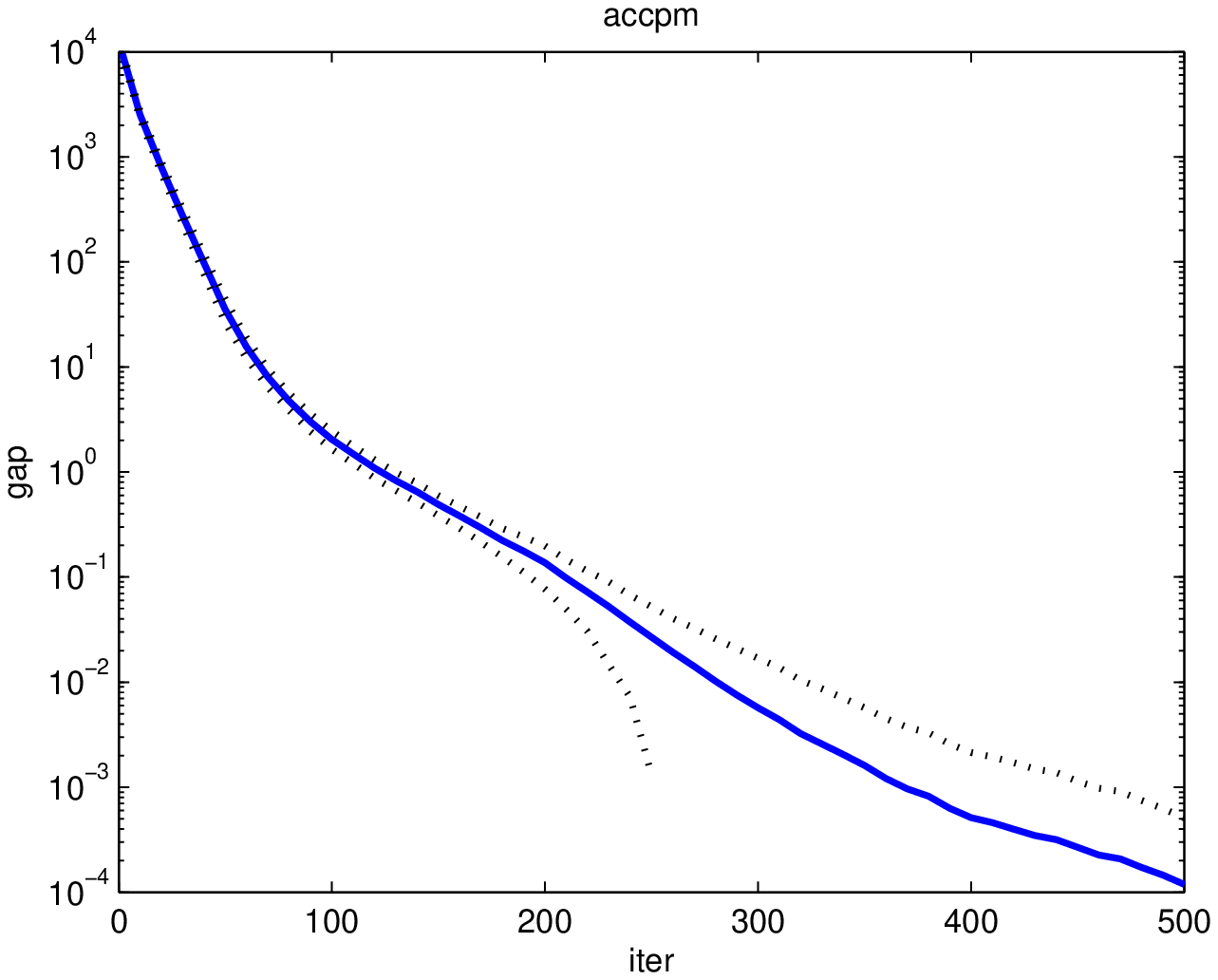} &
\psfrag{gap}[b][t]{\small{Duality Gap}} \psfrag{iter}[t][b]{\small{Iteration}}
\psfrag{pg}[b]{\small{Projected Gradient Method}}
\includegraphics[width=.50 \textwidth]{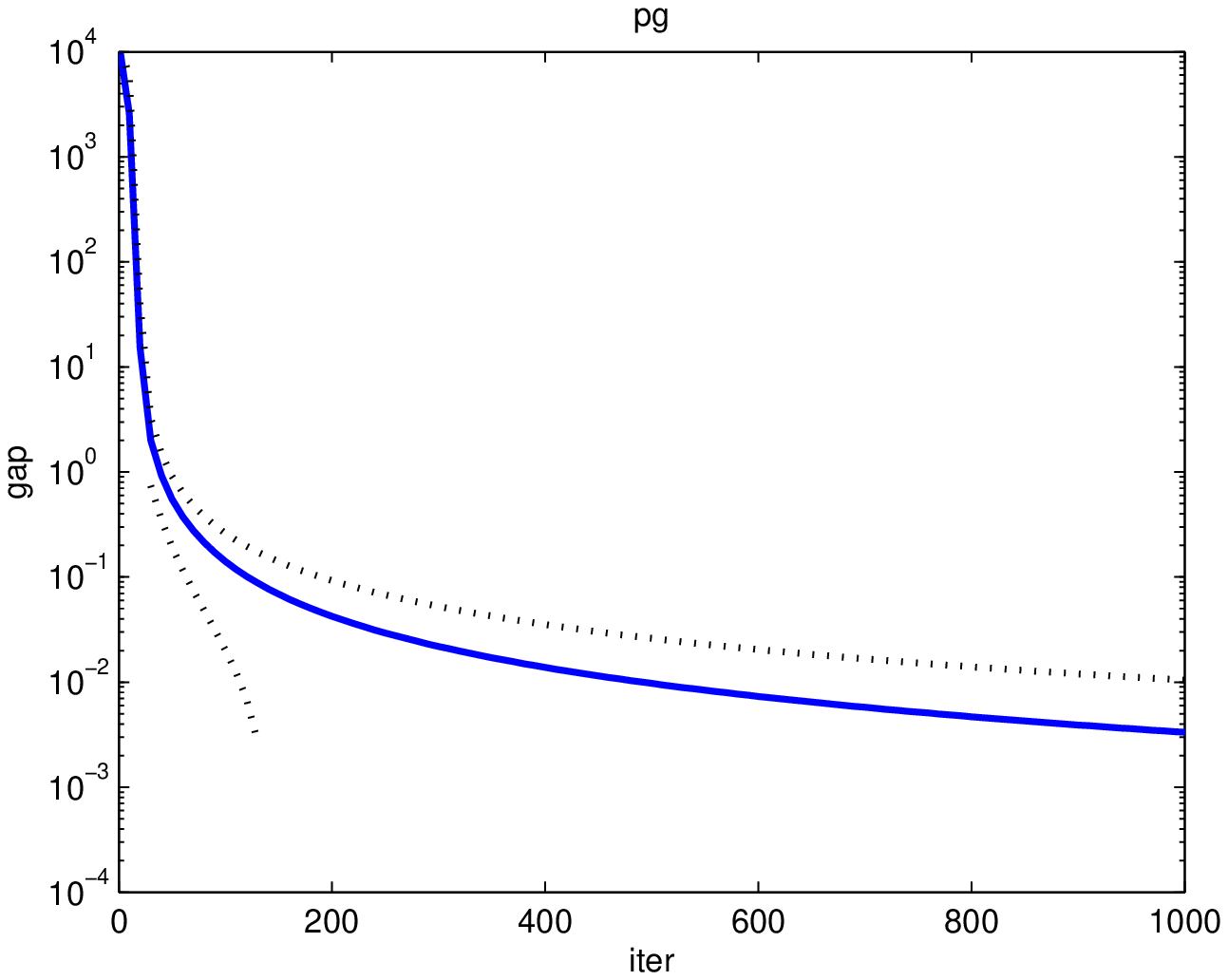}
\end{tabular*}
\caption{Convergence plots for ACCPM (left) and projected gradient method (right) on random subsets of the USPS-SS-3-5 data set  (average gap versus iteration number, dashed lines at plus and minus one standard deviation). ACCPM converges linearly to a higher precision while the gradient projection method converges faster in the beginning but stalls at a higher precision.}
\label{fig:convergence_figures}
\end{figure}

We finally examine the computing time of IndefiniteSVM using the projected gradient method and ACCPM and compare them with the SIQCLP method of \citeasnoun{Chen2008}.  Figure \ref{fig:time_figures} shows total runtime (left) and average iteration runtime (right) for varying problem dimensions on an example from the USPS data with Simpson kernel.  Experiments are averaged over 10 random data subsets and we fix $C=10$ with a tolerance of $.1$ for the duality gap.  For the projected gradient method, increasing $\rho$ increases the number of iterations to converge; notice that the average time per iteration does not vary over $\rho$.  SIQCLP also requires more iterations to converge for higher $\rho$, however the average iteration time seems to be less for higher $\rho$, so no clear pattern is seen when varying $\rho$.  Note that the number of iterations required varies widely (between 100 and 2000 iterations in this experiment) as a function of $\rho$, $C$, the chosen kernel and the stepsize.

Results for ACCPM and SIQCLP are shown only up to dimensions 500 and 300, respectively, because this sufficiently demonstrates that the projected gradient method is more efficient.  ACCPM clearly suffers from the complexity of the analytic center problem each iteration.  However, improvements can be made in the SIQCLP implementation such as using a regularized version of an efficient MKL solver (e.g. \citeasnoun{Rako2007}) to solve problem (\ref{eq:SIQCLP_master}) rather than MOSEK.  SIQCLP is also useful because it makes a connection between the indefinite SVM formulation and multiple kernel learning.  We observed from experiments that the duality gap found from SIQCLP is tighter than the upper bound on the duality gap used for the projected gradient method.  This could potentially be used to create a better stopping condition, however the complexity to derive the tighter duality gap (solving regularized MKL) is much higher than that to compute our current gap (solving a single SVM).

\begin{figure} [h!]
\begin{tabular*}{\textwidth}{@{\extracolsep{\fill}}cc}
\psfrag{time}[b][t]{\small{Total Time (sec)}}
\psfrag{dim}[t][b]{\small{Dimension}}
\includegraphics[width=.50\textwidth]{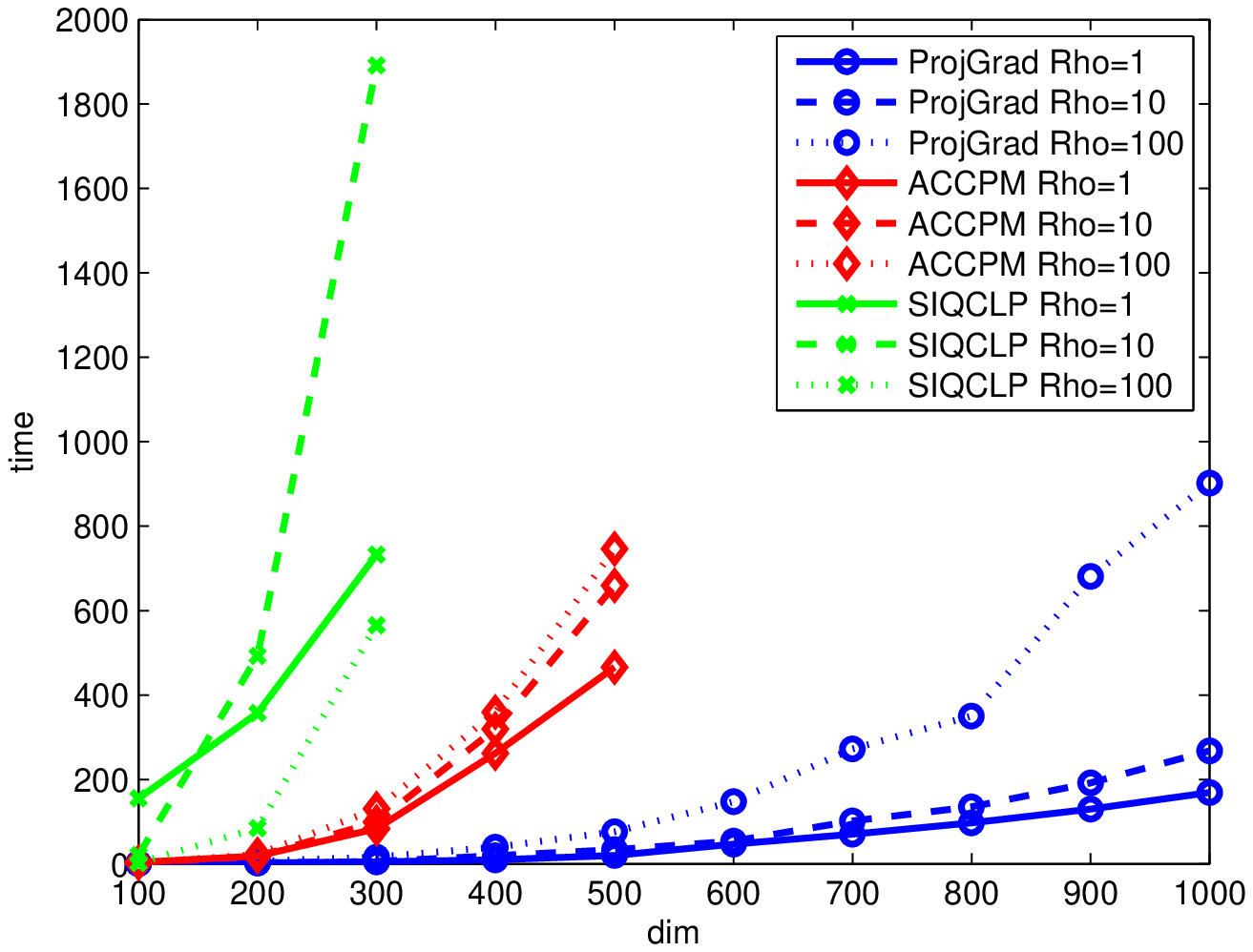}
&
\psfrag{time}[b][t]{\small{Average Time/Iteration (sec)}}
\psfrag{dim}[t][b]{\small{Dimension}}
\includegraphics[width=.50 \textwidth]{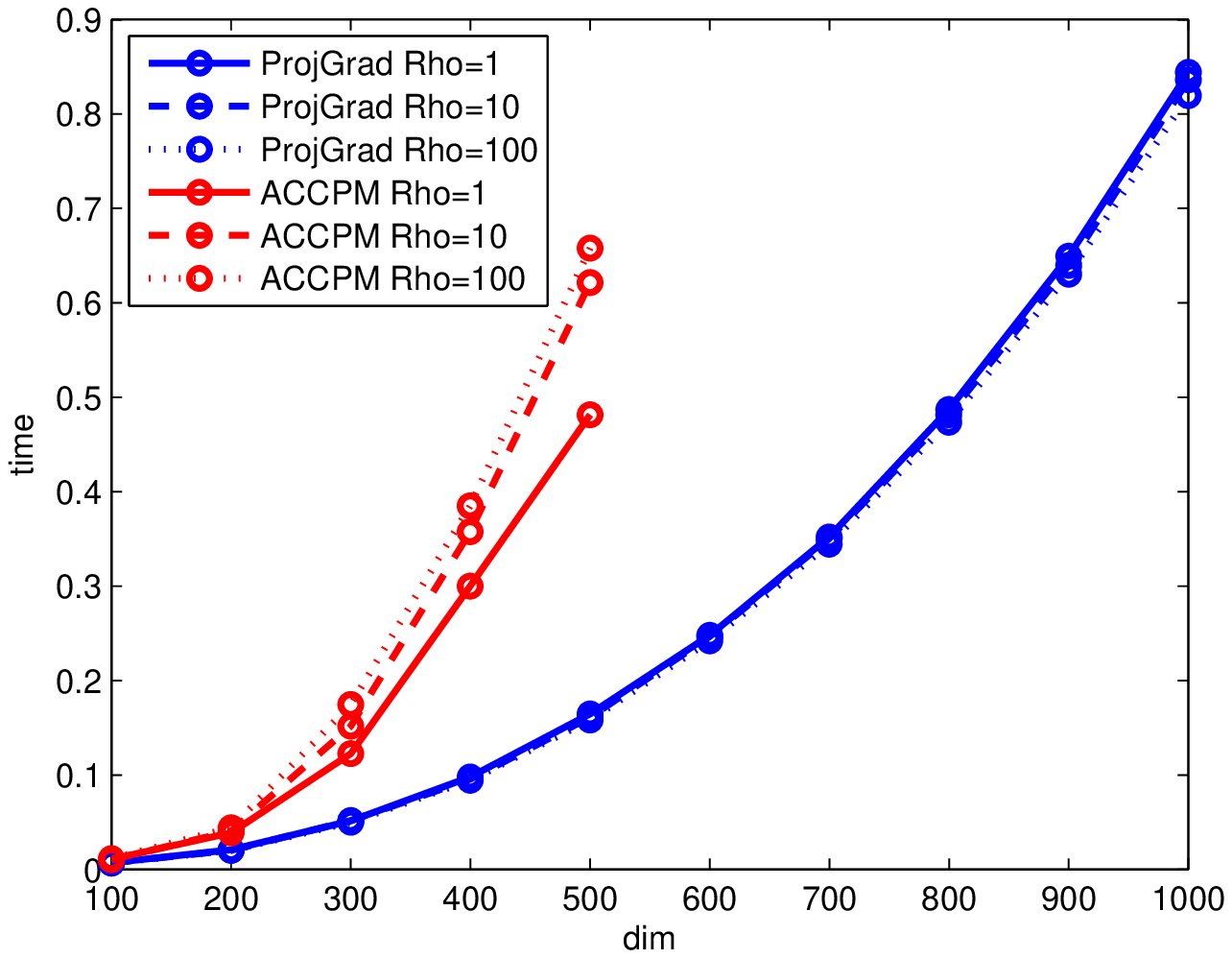} \end{tabular*}
\caption{Total time versus dimension (left) and average time per iteration versus dimension (right) using projected gradient and ACCPM IndefiniteSVM and SIQCLP (only for total time).  The number of iterations for convergence varies from 100 for the smallest dimension to 2000 for the largest dimension in this example which uses a Simpson kernel on the USPS 3-5 data.}
\label{fig:time_figures}
\end{figure}

\section{Conclusion}
We have proposed a technique for support vector machine classification with indefinite kernels, using a proxy kernel which can be computed explicitly. We also show how this framework can be used to improve generalization performance with potentially noisy Mercer kernels, as well as extend it to other kernel methods such as support vector regression and one-class support vector machines. We give two provably convergent algorithms for solving this problem on relatively large data sets. Our initial experiments show that our method fares quite favorably compared to other techniques handling indefinite kernels in the SVM framework and, in the limit, provides a clear interpretation for some of these heuristics.

\section*{Acknowledgements}
We are very grateful to M\'aty\'as Sustik for his rank-one update eigenvalue decomposition code and to Jianhui Chen and Jieping Ye for their SIQCLP Matlab code. We would also like to acknowledge support from NSF grant DMS-0625352, NSF CDI grant SES-0835550, a NSF CAREER award, a Peek junior faculty fellowship and a Howard B. Wentz Jr. junior faculty award.

\small
\bibliographystyle{agsm}
\bibliography{IndKernelLearning}

\end{document}